\documentclass[twoside,11pt]{article}
\usepackage{subcaption}
\usepackage[preprint]{jmlr2e-arxiv}
\usepackage[utf8]{inputenc}
\usepackage{dsfont}
\usepackage{natbib}
\usepackage{graphicx,graphics}
\usepackage{float}
\usepackage{wrapfig}
\usepackage{amsmath}
\usepackage{enumitem}
\usepackage{booktabs}
\usepackage{nicefrac}
\usepackage{yfonts}
\usepackage[euler]{textgreek}
\usepackage[normalem]{ulem}
\usepackage{listings}
\usepackage{MnSymbol}
\usepackage{silence}
\usepackage[mathscr]{eucal}
\usepackage{algorithmicx,algorithm}

\makeatletter
\def\thm@space@setup{%
  \thm@preskip=\parskip \thm@postskip=0pt
}
\makeatother

\usepackage{algpseudocode}

\makeatletter
\newcommand{\algmargin}{\the\ALG@thistlm}
\makeatother
\algnewcommand{\parState}[1]{\State\parbox[t]{\dimexpr\linewidth-\algmargin\relax}{\strut #1\strut}}
\allowdisplaybreaks

\newcounter{parentalgorithm}

\def\E{\mathbb{E}}
\def\R{\mathbb{R}}

\newcommand{\argmax}{\mathop{\mathrm{argmax}}}

\usepackage{xcolor}
\usepackage{tikz}
\usetikzlibrary{arrows.meta,positioning}
\usepackage[textsize=tiny]{todonotes}

\newcounter{counter}[section]

\newcommand{\conf}[1]{#1}
\newcommand{\rev}[1]{#1}
\newcommand{\mot}[1]{#1}
\newcommand{\abl}[1]{#1}

\newcommand{\PTR}{\mathrm{PTR}}
\newcommand{\logit}{\operatorname{logit}}

\newcommand{\Var}{\operatorname{Var}}

\newcommand{\Ccal}{\mathcal{C}}

\newcommand{\Scal}{\mathcal{S}}
\newcommand{\eff}{\mathrm{eff}}
\newcommand{\transp}{^{\!\top}}

\begin{document}

\title{Exploratory Creative Pre-Ranking:\\ Hierarchical Empirical-Bayes Thompson Sampling\\ for Real-Time Creative Optimization}

\author{\name Jiale Han\footnotemark[1] \email jialehan@ucla.edu\\
\addr Department of Statistics and Data Science, UCLA
\AND
\name Ajith Moparthi
\email amoparthi@roblox.com\\
\addr Roblox
\AND
\name Dan Xie
\email dxie@roblox.com\\
\addr Roblox
\AND
\name Xiaowu Dai\footnotemark[2]
\email daix@ucla.edu\\
\addr Departments of Statistics and Data Science and of Biostatistics, UCLA\\
}

\maketitle
\renewcommand{\thefootnote}{\fnsymbol{footnote}}
\footnotetext [1] {work done during an internship at Roblox.}
 \footnotetext [2] {\textit{Address for correspondence:} Xiaowu Dai, Department of Statistics and Data Science, UCLA, 8125 Math Sciences Bldg \#951554,  Los Angeles, CA 90095, USA. Email: daix@ucla.edu.}

\begin{abstract}
Dynamic creative optimization decides which creative of an ad to render for each request. Recent work places that choice before ad ranking, but relies on deterministic point predictors with no calibrated uncertainty. AI-generated content has made creatives nearly free, while feedback per creative remains sparse and selection faces a tight latency budget. We introduce a creative \emph{pre-ranker} that scores every recalled creative and shortlists each campaign's pool before the per-user ranker runs. Creatives sit in a hierarchy of audience segment, advertised item, campaign and creative, so a cold creative falls back on what its parent levels have earned, and how much it borrows from its parents is set in closed form from historical logs, with one parameter per level and nothing tuned. To stay within the latency budget, the pre-ranker predicts for an audience \emph{segment} rather than an individual user, from per-attribute counters. It selects by a single Thompson draw per creative, balancing exploration and exploitation so that a new creative still has a chance to be learned. Replaying Roblox Sponsored-ads traffic over campaigns with competing creatives, the shortlist lifts predicted play-through rate by $8.14\%$ against $2.80\%$ for an equally sized random shortlist, forwards more than $40\%$ fewer creatives than the full pool, and adds only $0.2\%$ to serving time. After an A/B test on live traffic, the method now serves in production at Roblox.
\end{abstract}

\begin{keywords} Creative Pre-Ranking, Dynamic Creative Optimization, Online Advertising, Empirical Bayes, Thompson Sampling \end{keywords}

\section{Introduction}

Online advertising platforms decide, for every request, which ads to show and how to render each one. A single advertised item, which we call a \emph{universe} (a \emph{game} on Roblox, the platform whose traffic we evaluate on, or a product on an e-commerce site), carries many \emph{creatives}: distinct combinations of images, titles and videos (Figure~\ref{fig:creatives}). Which creative is displayed has a first-order effect on engagement and hence on revenue \citep{hill2017efficient, lin2022joint, yang2024parallel}, and selecting the most effective one in real time and per user is \emph{dynamic creative optimization} (DCO). With AI-generated content advertisers now produce creatives at negligible cost, to the point that generation is itself optimized for click-through rate \citep{chen2025ctr}, so the creative pool is far larger than the ad pool while the feedback any single creative receives is extremely sparse: \rev{an individual creative-by-segment cell gathers only a handful of impressions, which is why creative selection is posed as a cold-start or bandit problem rather than a supervised one \citep{zhao2019you, wang2021hybrid, chen2021automated}.} DCO therefore sits between two competing requirements: the estimator must be \emph{statistically efficient} under sparse, cold-start feedback, and \emph{cheap enough} to evaluate inside the request over a candidate set far larger than the ad pool.

\begin{figure}[h] \centering \begin{subfigure}[t]{0.49\linewidth} \centering \includegraphics[width=0.85\linewidth]{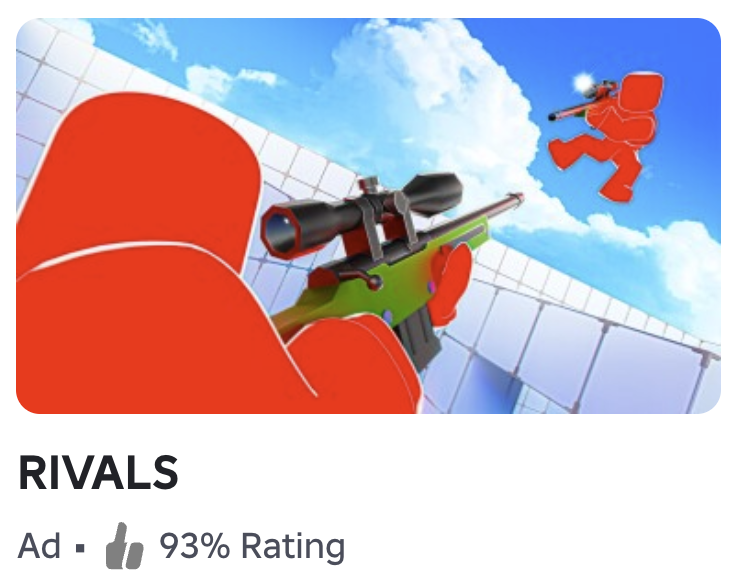} \end{subfigure} \begin{subfigure}[t]{0.49\linewidth} \centering \includegraphics[width=0.85\linewidth]{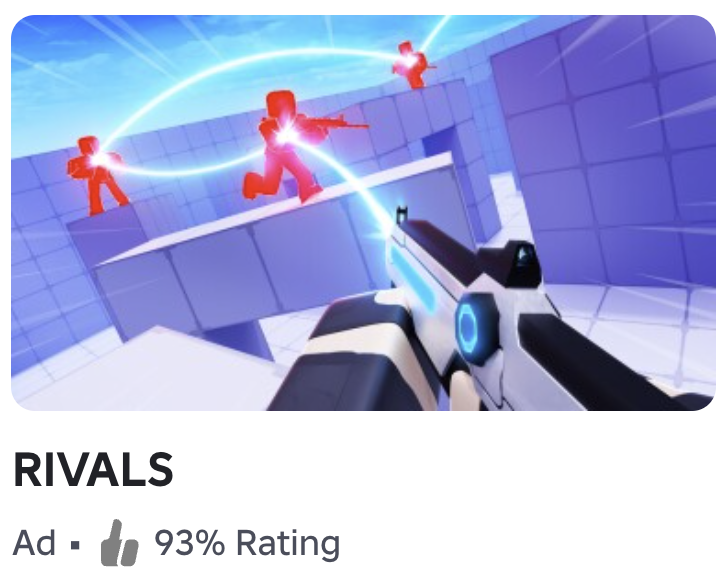} \end{subfigure} \caption{\conf{Two different creatives of a single advertised item.}}  \label{fig:creatives} \end{figure}

\mot{The usual answer is to order the stages so that creative selection can afford to be expensive. Ad ranking runs first, and the creative is chosen \emph{afterwards} for the handful of ads that have already won. That set is small enough to score per user. The saving costs information. The ranker scores each ad on a default rendering rather than on what the user will be shown, and a creative that would have made an ad competitive cannot rescue an ad the ranker has already dropped. Recent industrial work reverses the order.} \citet{lin2022joint} place the creative selector \emph{before} the ranker in a cascade, training a two-tower selector \citep{huang2013learning} with list-wise distillation from the ranking stage and sharing its creative embeddings with that stage; \citet{yang2024parallel} instead run the two modules in parallel, keeping the creative side cheap by giving it far fewer features than the ad ranker. These systems establish that the ordering should put creatives first, and we take that as settled: with the rendering fixed before ranking, the ranker consumes the chosen creative's features rather than a default proxy, and both report online gains on production traffic. \mot{What the reversal leaves open is what this stage does to learning.} Both are \emph{point predictors} of click-through rate with no calibrated uncertainty, so their selection is deterministic: a creative is surfaced because the model already scores it highly, and a fresh one is reached only through some separate mechanism.

The stage they presuppose is the one we study, and its position in the funnel is what makes exploration necessary rather than optional. A \emph{creative pre-ranker} sits between recall and ad ranking, reducing each recalled campaign's pool to an \emph{exploratory} shortlist and passing only that shortlist to a heavy per-user ranker whose cost grows with the set it receives. Exploratory is a strong requirement. The shortlist need not contain the creative the pre-ranker currently rates highest. When it leaves that creative out, the ranker must choose among the rest, and an untried creative gets shown. That is how it earns the evidence to overturn the current ranking. Ads commonly reach the ranker through a stage of this kind \citep{wang2020cold, ma2021fscd, li2022autofas, li2022inttower}; creatives commonly do not, although the creative pool is the larger of the two. Because the same stage that decides what the ranker sees also decides which creatives ever accumulate evidence, a deterministic pre-ranker forecloses its own learning \citep{wei2024enhancing}: a creative it never shortlists is a creative it can never learn about. Yet a pre-ranker is worth having only if it is far cheaper than the ranker it protects, which rules out per-user prediction on two counts: a single user's history is far too sparse to support a reliable per-creative estimate, and scoring every recalled creative per user is exactly the cost the stage exists to remove. This leads to our central question:

\emph{Can a creative pre-ranker be made statistically efficient enough to learn from sparse, cold-start feedback, and simultaneously cheap enough to score every recalled creative within the request?}

We answer affirmatively. The pre-ranker we build is deliberately coarser than the ranker it protects, predicting for an audience segment rather than an individual user, which is cheap enough to evaluate over the entire recalled set and leaves user-level resolution to the ranker downstream. Within that budget it keeps a calibrated sense of what it does not yet know, so a cold creative borrows strength from the item it advertises and the audience it targets, and that same uncertainty is what makes it explore. Our contributions are threefold.

(i) \textbf{A hierarchy the data calibrates, with no knobs to tune.} Creatives sit in a hierarchy (platform average, audience segment, advertised item, campaign, creative), so one with little history leans on its parents and trusts its own record only as that record grows. We show the squared-error-minimizing amount of leaning is one number per level---the impressions at which a creative's own record has earned as much weight as its parent's---read off historical logs in closed form, with nothing tuned (Section~\ref{sec:mse}).

(ii) \textbf{An audience-level estimate from counters cheap enough to keep.} Tracking the engagement of every creative in every combination of \rev{categorical attributes} is impractical online. The combinations multiply while each is starved of data. We keep one counter per attribute instead, dense and cheap, and combine them at request time into an estimate for the exact audience served. The combining rule is fitted offline on the full attribute combinations, which batch pipelines can see even though serving cannot maintain them. It is exact in two limiting cases, when the attributes carry independent information and when they merely repeat one another (Section~\ref{sec:realtime}).

(iii) \textbf{A shortlist that is nearly free and cuts the ranker's load.} Each creative is scored by one draw from its posterior, and each campaign gets a share of the shortlist proportional to its pool size, filled to a global budget by round robin, so no advertiser gains exposure by uploading more creatives (Section~\ref{sec:ts}). On replayed Roblox traffic it recovers $44\%$ of the gain from forwarding every creative on more than $40\%$ fewer of them, lifts predicted play-through rate by $8.14\%$ against $2.80\%$ for a random shortlist of equal size, and adds only $0.2\%$ to serving time (Section~\ref{sec:exp}). After an A/B test on live traffic, the method now serves production at Roblox.

\subsection{Related Work}

\noindent\textbf{Creative optimization and ranking in advertising.} Creative selection has largely been cast as a bandit problem, over layouts \citep{schwartz2017customer, hill2017efficient} and over composited elements \citep{chen2021automated, chen2021efficient}, all optimizing the creative in isolation, downstream of ad ranking; cold start is elsewhere primed with visual content \citep{zhao2019you, wang2021hybrid} or with user--creative profiles \citep{zhang2025culcnet}. Two industrial systems reverse that ordering, selecting creatives before \citep{lin2022joint} or beside \citep{yang2024parallel} the ranker, but both select deterministically, with no calibrated uncertainty to explore; sparsity they meet by distilling from the ranker and leaning on content features. We take the ordering as settled and contribute the stage it presupposes, an exploratory creative \emph{pre-ranker} that shortlists each campaign's pool from a hierarchical posterior over marginal counters. It can run independently ahead of both to provide more exploration and shorten the pool their selectors see.

\noindent\textbf{Empirical Bayes and shrinkage.} Borrowing strength across related units by shrinking per-unit estimates toward a common target is classical, from the James--Stein estimator \citep{james1961estimation} to parametric empirical Bayes \citep{efron1975data, morris1983parametric}. We use these ideas as the backbone of a \emph{servable} multi-level prior, fitting each level's variance component in closed form with the impression-weighted moment estimator of \citet{dersimonian1986meta} specialized to Binomial data, which gives a tuning-free concentration per level. Our contribution is not shrinkage itself but its instantiation as a hierarchical prior that a production system can evaluate under a latency budget, refreshed in real time by the marginal-counter reconstruction.

\noindent\textbf{Thompson sampling and hierarchical bandits.} Thompson sampling \citep{thompson1933likelihood, chapelle2011empirical} enjoys strong Bayesian and frequentist regret guarantees for Bernoulli and contextual bandits \citep{agrawal2012analysis, kaufmann2012thompson, russo2014learning, daniel2018tutorial}, and sharing a prior across related arms sharpens exploration in the many-armed, cold-start regime \citep{kveton2021meta, hong2022hierarchical}. Our selector is a Beta--Bernoulli Thompson sampler whose prior is the calibrated hierarchy above: its parameters are learned offline and its posterior is refreshed online from lightweight counters, matching the production separation between a slowly drifting prior and fast real-time evidence. We contribute the hierarchical, marginal-reconstructed posterior being sampled, not a new regret analysis. \section{Problem Setting}\label{sec:setting}

\begin{figure*}[t] \centering
\begin{tikzpicture}[
  x=1mm, y=1mm, >={Latex[length=1.5mm]}, font=\footnotesize,
  stage/.style={draw, rounded corners=1.5pt, align=center, text width=19mm, minimum height=9mm, inner sep=2pt},
  ours/.style={stage, fill=black!10, thick},
  tag/.style={draw, dashed, rounded corners=1.5pt, align=center, text width=26mm, font=\tiny, inner sep=2pt},
  cnt/.style={font=\scriptsize, inner sep=1pt},
  camp/.style={draw, rounded corners=1.5pt, minimum height=8mm, inner sep=1pt},
  keep/.style={circle, draw, fill=black!60, minimum size=2.6mm, inner sep=0pt},
  drop/.style={circle, draw, fill=white, minimum size=2.6mm, inner sep=0pt},
  faint/.style={dashed, black!50}
]
\node[stage] (rec)  at (0,0)   {Recall\\{\scriptsize campaigns matching $s$}};
\node[ours]  (pre)  at (36,0)  {Creative\\ pre-ranking\\{\scriptsize\textbf{this work}}};
\node[stage] (rank) at (80,0)  {Ad ranking\\with creative\\selection};
\node[stage] (auc)  at (122,0) {Auction and render};
\draw[->] (rec) -- node[cnt,above]{$\sum_g|\Ccal_g|$} (pre);
\draw[->] (pre) -- node[cnt,above]{${\le}\,B$} (rank);
\draw[->] (rank) -- node[font=\tiny,inner sep=1pt,above,align=center]{${\le}\,1$ creative\\per universe} (auc);

\node[tag] (off) at (22,17) {offline prior: concentrations $\kappa$, pooling weights $\psi$, node rates};
\node[tag] (rt)  at (54,17) {real-time counters $n_{c,k},y_{c,k}$};
\draw[->] (off.south) -- ([xshift=-5mm]pre.north);
\draw[->] (rt.south) -- ([xshift=5mm]pre.north);
\draw[->] (auc.north) |- (rt.east);
\node[font=\tiny, anchor=south] at (100,17.6) {logged impressions and plays};

\draw[faint] (4,-9) rectangle (112,-28);
\draw[faint] (pre.south west) -- (4,-9);
\draw[faint] (pre.south east) -- (112,-9);
\node[camp, minimum width=20mm] at (30,-16) {};
\node[camp, minimum width=20mm] at (58,-16) {};
\node[camp, minimum width=13mm] at (86,-16) {};
\foreach \x in {23,26.5} \node[keep] at (\x,-16) {};
\foreach \x in {30,33.5,37} \node[drop] at (\x,-16) {};
\foreach \x in {51,54.5} \node[keep] at (\x,-16) {};
\foreach \x in {58,61.5,65} \node[drop] at (\x,-16) {};
\node[keep] at (82.5,-16) {};
\foreach \x in {86,89.5} \node[drop] at (\x,-16) {};
\node[cnt] at (30,-22) {$E_g{=}2$};
\node[cnt] at (58,-22) {$E_g{=}2$};
\node[cnt] at (86,-22) {$E_g{=}1$};
\node[font=\tiny, anchor=west, align=left] at (6,-25.5) {one draw per creative;\\ top $E_g$ kept per campaign};
\node[keep] at (66,-25.5) {}; \node[font=\tiny, anchor=west] at (67.8,-25.5) {shortlisted};
\node[drop] at (88,-25.5) {}; \node[font=\tiny, anchor=west] at (89.8,-25.5) {dropped};
\end{tikzpicture}
\caption{The creative pre-ranker in context. It scores every recalled creative with one draw from its hierarchical posterior, assembled offline and refreshed online from single-attribute counters, and shortlists the top $E_g=\max(1,\lfloor\rho|\Ccal_g|\rfloor)$ of each campaign (inset) up to a global budget $B$ filled by round robin. Only that shortlist reaches ad ranking, which scores it with the production creative-selection model and keeps at most one creative per advertised item. The cap $B$ controls its per-candidate cost.}  \label{fig:pipeline} \end{figure*}

\subsection{Creative Selection Before Ad Ranking}\label{sec:system}

When a user arrives on the platform, the request creates advertising impression opportunities contested by every advertiser whose targeting matches that user. Targeting is expressed through categorical attributes carried by the request (e.g., location), which we summarize as a \emph{segment} $s=(d_1,\dots,d_K)$, a vector of $K$ discrete attributes, and the ad system first \emph{recalls} all eligible advertisements whose targeting matches $s$, the opening stage of a multi-stage retrieval cascade \citep{liu2017cascade}. Each advertised item, or \emph{universe} $u$, may run several \emph{campaigns} $g$ that differ in objective and budget, and each campaign contains many \emph{creatives} $c$: the distinct renderings (image, title, video) that constitute the individual ads actually shown. Carrying every candidate creative into the downstream ranker inflates request latency and forgoes exploration, whereas collapsing each ad to campaign-level information discards precisely the creative-level signal that drives engagement. What is needed is a lightweight model that \emph{ranks creatives} ahead of ad ranking. Prior creative-ranking systems optimize click-through rate \citep{lin2022joint, yang2024parallel}; we study its counterpart on Roblox, the \emph{play-through rate} (PTR), the probability that a user enters the advertised game.

Our goal is an \emph{exploratory creative pre-ranker} (Figure~\ref{fig:pipeline}): a stage between recall and ad ranking that reduces each recalled campaign's pool to a \emph{shortlist} and passes only that to the ranker. The common reduction for creatives is to pick one per campaign by deterministic point prediction \citep{lin2022joint, yang2024parallel, zhang2025culcnet}: a creative is surfaced because the model already ranks it highly, so the rest of the pool never accumulates the evidence that might overturn that ranking. Being at the same time far cheaper than the ranker it protects, ours predicts PTR at the granularity of a segment and leaves user-level resolution downstream. We write $\theta_{c,s}\in(0,1)$ for the true PTR of creative $c$ in segment $s$.

\subsection{A Hierarchical Random-Effects Model of PTR}\label{sec:hier}

We model PTR hierarchically, so that a leaf with little data can always draw on information from its parents. Each node of the hierarchy (the global root, the segment $s$, the universe $u$, the campaign $g$, and the creative $c$) has its own true rate, and we decompose $\theta_{c,s}$ into one mean-zero shift per level, following a nested (hierarchical) ANOVA,
\begin{equation}\label{eq:anova}
    \theta_{c,s}\;=\;\mu_0+a_s+b_u+e_g+h_c,
\end{equation}
where $\mu_0=\E[\theta_{c,s}]$ is the global mean PTR and each remaining term is a node's shift from its parent's conditional mean, from $a_s=\E[\theta_{c,s}\mid s]-\mu_0$ down to $h_c=\theta_{c,s}-\E[\theta_{c,s}\mid g,s]$, with variances $\sigma_s^2,\sigma_u^2,\sigma_g^2,\sigma_c^2$; by construction each term averages to zero over its parent's traffic. The decomposition mirrors the containment already built into the ad system, so that a sparsely served node can lean on its parent's estimate instead of its own thin data. Conditioned on a true rate $\theta$, a node's plays are Binomial,
\begin{equation}\label{eq:binom}
    y\mid\theta\sim\mathrm{Binomial}(n,\theta),
\end{equation}
where $n$ is impressions, $y$ plays, $f=n-y$ non-plays, and $\hat p=y/n$ the raw rate, each carrying whichever subscript matches its level (e.g.\ $n_{c,s}$ at the leaf, $n_u$ at the universe).

\section{Main Results} Section~\ref{sec:mse} derives the MSE-optimal estimator of $\theta_{c,s}$ under the ideal assumption that full play and impression counts are available at every level. Section~\ref{sec:realtime} then reconstructs that estimate from single-attribute counters, which are far cheaper to maintain and query at request time, learning how to combine them with a $K$-parameter logistic regression fit offline rather than assuming the targeting attributes independent. Section~\ref{sec:ts} turns the resulting posterior into a selection policy via Thompson sampling. 

\subsection{Empirical-Bayes Shrinkage with a Data-Calibrated Concentration}\label{sec:mse} A node with few impressions cannot be read off its raw rate $\hat p=y/n$, and the hierarchy exists precisely so that such a node can borrow strength from its parent. We therefore shrink the node's own rate toward its parent's true mean, written $\mu$,
\begin{equation}\label{eq:shrink}
\hat\theta=w\,\hat p+(1-w)\,\mu,\qquad w\in[0,1],
\end{equation}
trusting the data more as $n$ grows. The only question is how to set the weight $w$, which we do from the data with no free parameter. Consider a parent node with children $[C]=\{1,\dots,C\}$ (e.g.\ the campaigns in a universe), each child $i$ having a true PTR $\theta_i$, and posit across children the moment model
\begin{equation}\label{eq:remodel}
\begin{gathered}
\E[\theta_i]=\mu,\qquad \Var(\theta_i)=\sigma^2,\\
y_i\mid\theta_i\sim\mathrm{Binomial}(n_i,\theta_i),
\end{gathered}
\end{equation}
under which $\hat p_i=y_i/n_i$ is conditionally unbiased for $\theta_i$ with exact sampling variance $\theta_i(1-\theta_i)/n_i$. This constrains only the first two moments of $\theta_i$ and imposes no distributional form on its prior.

\begin{theorem}[MSE-optimal shrinkage]\label{thm:mse} Under model~\eqref{eq:remodel}, among all shrinkage estimators of the form~\eqref{eq:shrink}, the weight minimizing the mean squared error (MSE) $\E[(\hat\theta_i-\theta_i)^2]$ is
\begin{equation}\label{eq:wstar}
\begin{gathered}
    w^\star=\frac{\sigma^2}{\sigma^2+\big(\mu(1-\mu)-\sigma^2\big)/n_i}
    =\frac{n_i}{n_i+\kappa},\quad
    \kappa=\frac{\mu(1-\mu)}{\sigma^2}-1 ,
\end{gathered}
\end{equation}
and the resulting estimator is $\hat\theta_i=(\kappa\mu+y_i)/(\kappa+n_i)$. \end{theorem}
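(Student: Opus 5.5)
The plan is to write the error of the shrinkage estimator~\eqref{eq:shrink} as a sum of a sampling term and a prior term, show that the two terms are uncorrelated, and minimize the resulting quadratic in $w$. Write
\[
\hat\theta_i-\theta_i \;=\; w(\hat p_i-\theta_i)+(1-w)(\mu-\theta_i).
\]
Under model~\eqref{eq:remodel}, conditioning on $\theta_i$ gives $\E[\hat p_i-\theta_i\mid\theta_i]=0$. The cross term $\E[(\hat p_i-\theta_i)(\mu-\theta_i)]$ therefore vanishes by the tower property. This leaves
\[
\mathrm{MSE}(w)=w^2\,\E\!\left[\frac{\theta_i(1-\theta_i)}{n_i}\right]+(1-w)^2\sigma^2 .
\]

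Next I would compute the expected sampling variance using only the two posited moments. Since $\E[\theta_i^2]=\sigma^2+\mu^2$, we get $\E[\theta_i(1-\theta_i)]=\mu-\mu^2-\sigma^2=\mu(1-\mu)-\sigma^2$. This is the step that needs care, and it is the only real content of the proof. The point is that the Binomial variance is heteroscedastic in $\theta_i$, so one must average it over the prior rather than plug in $\mu(1-\mu)$. That averaging produces the $-\sigma^2$ correction, and with it the ``$-1$'' in $\kappa$. Note also that $\mu(1-\mu)-\sigma^2\ge 0$ automatically, because $\theta_i\in(0,1)$ forces $\E[\theta_i^2]\le\E[\theta_i]$.

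The objective is $\mathrm{MSE}(w)=w^2V+(1-w)^2\sigma^2$ with $V=(\mu(1-\mu)-\sigma^2)/n_i$. This is a convex quadratic in $w$. Setting its derivative to zero gives
\[
w^\star=\frac{\sigma^2}{\sigma^2+V},
\]
which lies in $[0,1]$, so the constraint $w\in[0,1]$ is inactive. I assume $\sigma^2>0$; the degenerate case gives $w^\star=0$, matching $\kappa=\infty$. To get the second form, divide the numerator and denominator of $w^\star$ by $\sigma^2/n_i$. This yields $w^\star=n_i/(n_i+\kappa)$ with $\kappa=\mu(1-\mu)/\sigma^2-1$.

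Finally, substitute $w^\star$ into~\eqref{eq:shrink} and use $\hat p_i=y_i/n_i$:
\[
\hat\theta_i=\frac{n_i}{n_i+\kappa}\cdot\frac{y_i}{n_i}+\frac{\kappa}{n_i+\kappa}\,\mu=\frac{\kappa\mu+y_i}{\kappa+n_i}.
\]
This is the claimed estimator.
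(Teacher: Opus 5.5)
Your proposal is correct and follows the paper's proof essentially step for step. Both use the same error decomposition, kill the cross term with the tower rule, average the Binomial variance to get $\E[\theta_i(1-\theta_i)]=\mu(1-\mu)-\sigma^2$, and minimize the resulting convex quadratic in $w$; your explicit check that $w^\star\in[0,1]$, so the constraint is inactive, is a small point the paper leaves implicit.
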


The proof is in Appendix~\ref{app:mse}. This estimator is exactly the posterior mean of a $\mathrm{Beta}(\kappa\mu,\kappa(1-\mu))$ prior under the Binomial likelihood~\eqref{eq:binom}, so we adopt the Beta as the working prior for every node. It reproduces the MSE-optimal shrinkage at no loss, since the matched prior has mean $\mu$ and variance exactly $\sigma^2$, and being conjugate it keeps the posterior Beta, $\theta\mid y_i\sim\mathrm{Beta}(\kappa\mu+y_i,\kappa(1-\mu)+f_i)$, so updates are incremental and posterior sampling is trivial for the Thompson-sampling policy \citep{chapelle2011empirical}. The crossover $w^\star=\tfrac12$ falls at $\kappa=n_i$: the concentration is the number of a node's own impressions at which it trusts its data and its prior equally.

Two population quantities remain. The parent mean $\mu$ we estimate directly by the impression-pooled rate $\hat\mu=\big(\sum_i y_i\big)/\big(\sum_i n_i\big)$. The between-child variance $\sigma^2$ is the crux, because it is not directly observable: the spread we can measure across children mixes the true between-child spread with sampling noise, since $\Var(\hat p_i)=\sigma^2+\E[\theta_i(1-\theta_i)]/n_i$. Separating the two requires comparing siblings, for which we assume the following.

\begin{assumption}[Conditionally independent children]\label{ass:re} Conditionally on the parent, the children are independent: the pairs $(\theta_i,y_i)$, $i\in[C]$, are mutually independent, with $\theta_i$ (mean $\mu$, variance $\sigma^2$) and $y_i\mid\theta_i\sim\mathrm{Binomial}(n_i,\theta_i)$ as above. \end{assumption}

The assumption is mild here, because the hierarchy is built precisely so that the systematic co-movement among siblings is carried by the parent: shared audience, placement and temporal effects enter through the common $\mu$ and $\sigma^2$, and conditioning on the parent removes them, leaving each child its own random effect together with Binomial counts over that child's disjoint stream of impressions.

Under Assumption~\ref{ass:re}, the impression-weighted scatter $Q=\sum_{i} n_i(\hat p_i-\hat\mu)^2$ of a parent's children satisfies
\begin{equation}\label{eq:scatter}
\begin{split}
\E[Q]={}&(C-1)\big(\mu(1-\mu)-\sigma^2\big)+\sigma^2\Big(N-\frac{\sum_{i=1}^{C} n_i^2}{N}\Big),
\end{split}
\end{equation}
with $N=\sum_i n_i$ (Theorem~\ref{thm:scatter}, Appendix~\ref{app:scatter}). This is what makes $\sigma^2$ identifiable: the expected scatter is a known-coefficient linear function of $\sigma^2$, a signal term $\sigma^2\big(N-\sum_i n_i^2/N\big)$ plus a sampling-noise offset $(C-1)\big(\mu(1-\mu)-\sigma^2\big)$, so subtracting the offset and dividing by the signal coefficient inverts the observed scatter. Two gaps keep it from being an estimator on its own: it holds only in expectation, so a single parent's scatter is a high-variance readout, and it still involves the unknown $\mu$. Pooling the scatters of all parents at a level, which their independence lets concentrate, and substituting $\hat\mu$ for $\mu$ closes both.

\begin{corollary}[Moment estimator for the level variance]\label{cor:sigma} Under Assumption~\ref{ass:re} at each parent of a level with parents $[P]=\{1,\dots,P\}$, let parent $j$ have $C_j$ children with impressions $n_{ji}$, total $N_j=\sum_{i=1}^{C_j} n_{ji}$, pooled rate $\hat\mu_j=\big(\sum_{i=1}^{C_j} y_{ji}\big)/N_j$, and scatter $Q_j=\sum_{i=1}^{C_j} n_{ji}(\hat p_{ji}-\hat\mu_j)^2$. With $D_j=N_j-\sum_{i=1}^{C_j} n_{ji}^2/N_j-(C_j-1)$, the estimator
\begin{equation}\label{eq:sigmahat}
\hat\sigma^2=\max\!\Bigg\{\frac{\sum_{j=1}^{P}\big[\,Q_j-(C_j-1)\,\hat\mu_j(1-\hat\mu_j)\,\big]}{\sum_{j=1}^{P} D_j},\ 0\Bigg\}
\end{equation}
estimates the level variance $\sigma^2$ with only an $O(1/N_j)$ bias per parent: it is exactly unbiased when the true means $\mu_j$ are used in place of $\hat\mu_j$ (and before clipping). \end{corollary}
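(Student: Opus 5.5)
The plan is to work parent by parent from the scatter identity \eqref{eq:scatter} (Theorem~\ref{thm:scatter}), and then add across parents. Fix parent $j$ and its true mean $\mu_j$. Rearranging \eqref{eq:scatter} gives
\[
\E[Q_j]-(C_j-1)\mu_j(1-\mu_j)=\sigma^2\Big(N_j-\tfrac{\sum_i n_{ji}^2}{N_j}-(C_j-1)\Big)=\sigma^2 D_j .
\]
So $Q_j-(C_j-1)\mu_j(1-\mu_j)$ is an unbiased estimator of $\sigma^2 D_j$. The impressions $n_{ji}$ are treated as fixed, and only this first moment is used. Independence across children within a parent (Assumption~\ref{ass:re}) is already built into \eqref{eq:scatter}.

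Summing over $j$ by linearity of expectation, the numerator of \eqref{eq:sigmahat} with $\mu_j$ in place of $\hat\mu_j$ has expectation $\sigma^2\sum_j D_j$. The denominator $\sum_j D_j$ is deterministic, and I assume it is positive, which holds whenever a level has some parent with real spread of impressions. The unclipped ratio is therefore exactly unbiased, which is the second claim. Independence across parents is not needed for unbiasedness. It only matters for the concentration remark and for justifying the pooling.

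The remaining step, and the only real obstacle, is the bias that comes from substituting $\hat\mu_j$ inside the offset term $(C_j-1)\hat\mu_j(1-\hat\mu_j)$. Here $\hat\mu_j=\sum_i y_{ji}/N_j$ is unbiased for $\mu_j$, because $\E[y_{ji}]=n_{ji}\mu_j$. Then
\[
\E[\hat\mu_j(1-\hat\mu_j)]=\mu_j(1-\mu_j)-\Var(\hat\mu_j).
\]
For the variance, I condition on the $\theta_{ji}$ and use the law of total variance:
\[
\Var(\hat\mu_j)=\frac{1}{N_j^2}\sum_i\Big(n_{ji}\E[\theta_{ji}(1-\theta_{ji})]+n_{ji}^2\sigma^2\Big).
\]
Since $\E[\theta(1-\theta)]=\mu_j(1-\mu_j)-\sigma^2$, this equals
\[
\frac{\mu_j(1-\mu_j)-\sigma^2}{N_j}+\sigma^2\frac{\sum_i n_{ji}^2}{N_j^2}.
\]
The first term is plainly $O(1/N_j)$. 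The second term is at most $\sigma^2$ and is $O(1/N_j)$ only when no single child dominates the impressions, that is, when $\max_i n_{ji}/N_j=O(1/N_j)$ or, more loosely, $\sum_i n_{ji}^2/N_j^2$ is small. I would state the bias per parent as $(C_j-1)\Var(\hat\mu_j)$, with the $O(1/N_j)$ reading holding under this balance condition. Because $\Var(\hat\mu_j)\ge 0$, the substitution makes the offset too small in expectation, so the bias is upward.

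Finally, clipping at $0$ can only move the estimate toward the true value, since $\sigma^2\ge 0$. It does not increase the error, though it does break exact unbiasedness, which is why the statement says ``before clipping''.
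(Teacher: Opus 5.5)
Your unbiasedness argument is the paper's own: rearrange \eqref{eq:scatter} into $\E[Q_j]=(C_j-1)\mu_j(1-\mu_j)+\sigma^2D_j$, sum over parents, and use that $\sum_jD_j$ is deterministic. You are also right that independence across parents plays no role there. Your exact formula $\Var(\hat\mu_j)=(\mu_j(1-\mu_j)-\sigma^2)/N_j+\sigma^2\sum_in_{ji}^2/N_j^2$ goes further than the paper, and it catches a real slip in its proof, which asserts $\Var(\hat\mu_j)=O(1/N_j)$. By Cauchy--Schwarz $\sum_in_{ji}^2/N_j^2\ge1/C_j$, so in fact $\Var(\hat\mu_j)\ge\sigma^2/C_j$.

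Where your proposal falls short is in settling the $O(1/N_j)$ claim. You stop at the numerator bias $(C_j-1)\Var(\hat\mu_j)$ and ask when $\Var(\hat\mu_j)$ is $O(1/N_j)$. But $(C_j-1)\Var(\hat\mu_j)\ge\sigma^2(C_j-1)/C_j\ge\sigma^2/2$ whenever $C_j\ge2$, so at the numerator level the bias is never $O(1/N_j)$ if $\sigma^2>0$. Your condition $\max_in_{ji}=O(1)$ does not help: it is far stronger than \emph{no child dominates}, and it forces $C_j\gtrsim N_j$, which leaves that term of order one.

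The missing step is to divide by the denominator. The bias of the unclipped estimator is $\sum_jD_jr_j/\sum_jD_j$ with $r_j=(C_j-1)\Var(\hat\mu_j)/D_j$, and what decides the rate is whether $D_j$ grows like $N_j$.
\begin{itemize}
\item Under equal allocation $n_{ji}=n_j\ge2$, one gets $D_j=(C_j-1)(n_j-1)$ and exactly $r_j=N_j^{-1}\big(\sigma^2+\mu_j(1-\mu_j)/(n_j-1)\big)$. This is the statement's $O(1/N_j)$, even with unbounded $n_j$.
\item With one child holding $N_j-C_j+1$ impressions and the others one each, $D_j=(C_j-1)(1-C_j/N_j)$ and $r_j\to\sigma^2$, a bias as large as the target.
\end{itemize}
So the corollary, and the paper's proof of it, do need a balance hypothesis. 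The right one is comparable child sizes, so that $D_j\asymp N_j$, not bounded impressions per child.
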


The proof is in Appendix~\ref{app:sigma}. This is the DerSimonian--Laird moment estimator \citep{dersimonian1986meta} specialized to Binomial data with the exact sampling variance. The $(C-1)$ correction is what removes the sampling-noise bias: a parent with a single child, or with one impression per child, has $D=0$ and is correctly ignored rather than letting pure sampling noise masquerade as between-child signal.

Corollary~\ref{cor:sigma} supplies one variance component per layer. For the prior mean we may fit one value per layer or, since PTRs are small and vary little across layers, adopt the global rate $\mu_0$. Either choice keeps only one prior strength $\kappa=\mu(1-\mu)/\sigma^2-1$ per layer. Writing $\kappa_s,\kappa_u,\kappa_g,\kappa_c$ for those strengths, we apply Theorem~\ref{thm:mse} once per layer of~\eqref{eq:anova}, each layer's estimate becoming the mean the layer beneath it shrinks toward, starting from $\mu_0$ at the top:
\begin{equation}\label{eq:ladder}
\begin{aligned}
    \mu_s&=\frac{\kappa_s\mu_0+y_s}{\kappa_s+n_s}, &
    \mu_{u}&=\frac{\kappa_u \mu_s+y_{u}}{\kappa_u+n_{u}},\\
    \mu_{g}&=\frac{\kappa_{g} \mu_{u}+y_{g}}{\kappa_{g}+n_{g}}, &
    \mu_{c,s} &= \frac{\kappa_c \mu_{g}+y_{c,s}}{\kappa_{c}+n_{c,s}},
\end{aligned}
\end{equation}
and the posterior distribution of the PTR is
\begin{equation}\label{ptr}
\begin{gathered}
    \theta_{c,s}\sim\mathrm{Beta}(\alpha_{c,s},\beta_{c,s}), \\
    (\alpha_{c,s},\beta_{c,s})=\big(\kappa_c\mu_g+y_{c,s},\ \kappa_c(1-\mu_g)+f_{c,s}\big),
\end{gathered}
\end{equation}
with MSE-optimal point estimate $\mu_{c,s}$. Every line of~\eqref{eq:ladder} is the estimator $\hat\theta=(\kappa\mu+y)/(\kappa+n)$ of Theorem~\ref{thm:mse} read at one layer. We name it $\mu$ with that layer's subscript rather than $\hat\theta$, keeping $\theta$ for a true rate and $\mu$ for the shrunken estimate of it that the next layer down treats as its prior mean.

\subsection{A Real-Time Posterior from Marginal Counters}\label{sec:realtime} Evaluating the posterior~\eqref{ptr} through the cascade~\eqref{eq:ladder} requires, for a given segment $s$, the counters $(n,y,f)$ at every level from the segment down to the creative. In production these are read from marginal aggregates that may be sparse, or altogether absent for a fresh node, so we back off along the hierarchy: whenever a level has no impressions of its own, its estimate defaults to its parent's, cascading upward until, in the extreme, it reduces to the global rate $\mu_0$,
\begin{equation}\label{eq:fallback}
\begin{aligned}
 \mu_{c,s}&\xrightarrow{\,n_{c,s}=0\,} \mu_g\xrightarrow{\,n_{g}=0\,}\mu_u\xrightarrow{\,n_{u}=0\,} \mu_s \xrightarrow{\,n_{s}=0\,} \mu_0 ,
\end{aligned}
\end{equation}
so the prior mean in \eqref{ptr} is $\mu_g$ when the campaign has support, and otherwise the one inherited from the finest coarser level that has support. We write \eqref{ptr} in that form, with $\mu$ that inherited mean, as
\begin{equation}\label{ptr2}
\begin{gathered}
    \theta_{c,s}\sim\mathrm{Beta}(\alpha_{c,s},\beta_{c,s})=\mathrm{Beta}(\kappa_c\mu+y_{c,s},\ \kappa_c(1-\mu)+f_{c,s}).
\end{gathered}
\end{equation}

These joint counts are hard to maintain in real time, and not only at the leaf: the cascade needs the segment rate at \emph{every} level. With $K$ attributes and up to $m$ values each, a segment is one of as many as $m^K$ combinations, so a joint counter per node-by-segment cell is infeasible to store and search online, and each such cell is severely sparse. What stays cheap and dense are \emph{single-attribute} counters, impressions and plays split by one attribute at a time, only $mK$ of them per node. The reconstruction is identical at every level, so we state it for a generic node $z$ (a creative, a campaign, a universe, or the global root). For attribute $k$ and value $v$ we keep impressions $n_{z,k}(v)$ and plays $y_{z,k}(v)$, and we write $\hat p_z=y_z/n_z$ for the node's overall rate, the totals being independent of $k$ because each attribute's buckets partition the node's impressions. Throughout, $\logit x=\log\frac{x}{1-x}$ and $\operatorname{sigmoid}(t)=(1+e^{-t})^{-1}$ is its inverse. A bucket with few impressions is noisy for the reason of Section~\ref{sec:mse}, so we first stabilize each marginal by shrinking it toward the node's own rate, with per-attribute strength $\kappa_k$ from Corollary~\ref{cor:sigma},
\begin{equation}\label{eq:mshrink}
\begin{gathered}
    \bar p_{z,k}(v)=\frac{\kappa_k\hat p_z+y_{z,k}(v)}{\kappa_k+n_{z,k}(v)},\quad
    w_{z,k}(v)=\frac{n_{z,k}(v)}{\kappa_k+n_{z,k}(v)},
\end{gathered}
\end{equation}
and summarize it by its \emph{lift} over the node on the log-odds scale,
\begin{equation}\label{eq:lift}
    \delta_{z,k}(v)=\logit\bar p_{z,k}(v)-\logit\hat p_z .
\end{equation}
A request's segment $s=(d_1,\dots,d_K)$ selects one lift per attribute, so the model has to turn the $K$ numbers $\delta_{z,1}(d_1),\dots,\delta_{z,K}(d_K)$ into the joint rate $\theta_z(s)$. We assume it does so linearly, with one coefficient per attribute.

\begin{assumption}[Pooling of marginals]\label{ass:pool} There are weights $\psi=(\psi_1,\dots,\psi_K)$, common to all nodes. For every node $z$ and segment $s$,
\begin{equation}\label{eq:pool}
\begin{split}
\logit\theta_{z}(s)
  ={}&\logit\hat p_z+\sum_{k=1}^{K}\psi_k\,\delta_{z,k}(d_k).
\end{split}
\end{equation}
\end{assumption}

Read as a statistical model, \eqref{eq:pool} \emph{is} a logistic regression for the play indicator of an impression: its covariates are the $K$ lifts the counters report, $\logit\hat p_z$ is a fixed offset, and $\psi$ is the coefficient vector. Unit weights,
\begin{equation}\label{eq:naive}
\begin{gathered}
\psi\equiv\mathbf 1,\qquad\text{i.e.} \ \logit\theta_{z}(s)\approx\logit\hat p_z+\sum_{k=1}^{K}\delta_{z,k}(d_k),
\end{gathered}
\end{equation}
recover the standard additive model \citep{richardson2007predicting,hastie1986generalized} and are exact when the attributes are independent in the node's traffic, since each bucket then measures its own attribute's effect and nothing else. \rev{Targeting axes co-vary in practice rather than being independent}, so a bucket that looks strong is partly strong because of the company it keeps, and unit weights credit the same signal more than once. Two extremes make this precise: if the attributes are mutually independent then \eqref{eq:pool} holds with $\psi\equiv\mathbf 1$, and if $J$ of them duplicate one another while the remaining $K-J$ are independent then it holds with $\psi_k=1/J$ on the duplicated group and $\psi_k=1$ elsewhere (Proposition~\ref{prop:limits}, Appendix~\ref{app:limits}). Between them, $\psi_k$ says how much of a bucket's apparent deviation is genuinely its own, and $1/\psi_k$ reads as an effective number of attributes sharing that signal. Real attributes fall in between, which makes $\psi$ an empirical question rather than something to assume; estimating it instead of asserting it is our only departure from \eqref{eq:naive}.

The weights are shared across levels, used at the creative for its real-time rate and at the coarser nodes for the prior mean that rate is shrunk toward, and are fit once on universe-level cells while $\hat p_z$ and $\delta_{z,k}$ are re-estimated per node. Sharing is deliberate: the weights say how far the targeting attributes duplicate one another in the traffic, which is a property of the request stream rather than of the node, and they are identified only where cells are dense, so fitting them per creative would buy a small reduction in bias at a much larger variance. It also keeps the posterior coherent, since prior mean and real-time rate are combined with weight $\kappa_c/(\kappa_c+n_\eff)$, which would lose the MSE reading of Theorem~\ref{thm:mse} were they reconstructed differently. Empirically, weights fit on universe cells and applied to held-out creative cells cut the reconstruction's mean absolute error by about a third relative to \eqref{eq:naive}.

The joint cell is unavailable \emph{online} but perfectly visible \emph{offline}, since the binding constraint is the online store and not the historical record. We therefore fit $\psi$ against batch cells, once per refresh window and alongside the concentrations of Section~\ref{sec:mse}, by maximizing the ridge-penalized Binomial log-likelihood of \eqref{eq:pool},
\begin{equation}\label{eq:psifit}
\begin{split}
\hat\psi=\argmax_{\psi_1,\dots,\psi_K\in\R}\ \sum_{z,s}\Big\{&y_z(s)\log \pi_{z,s}(\psi)\\
&\hspace{-5.2em}+\big(n_z(s)-y_z(s)\big)\log\big(1-\pi_{z,s}(\psi)\big)\Big\}-\frac{\lambda}{2}\,\|\psi\|_2^2,
\end{split}
\end{equation}
where $n_z(s)$ and $y_z(s)$ are the cell's impressions and plays, $\pi_{z,s}(\psi)=\operatorname{sigmoid}\big(\logit\hat p_z+x_z(s)\transp\psi\big)$ for the covariate row $x_z(s)=\big(\delta_{z,1}(d_1),\dots,\delta_{z,K}(d_K)\big)$ read from the same stored marginals serving uses, and $\hat\psi$ is clipped to $[0,1]^K$, the range in which Proposition~\ref{prop:limits} gives it meaning. Nothing is fit at request time, and the fit is cheap enough to live in the aggregation pipeline because it is $K$-dimensional however many cells the window holds (Appendix~\ref{app:psifit}).

Replacing $\psi$ by $\hat\psi$ in \eqref{eq:pool} gives the estimator evaluated at request time,
\begin{equation}\label{eq:recon}
\hat\mu_{z}=\operatorname{sigmoid}\Big(\logit\hat p_z+\sum_{k=1}^{K}\hat\psi_k\,\delta_{z,k}(d_k)\Big).
\end{equation}
Instantiating \eqref{eq:recon} at $z=c$ gives the leaf estimate $\hat\mu_{c,s}$, and at the campaign, the universe and the global root the higher-level estimates that are the serving-time counterparts of the cascade priors in \eqref{eq:ladder} and the fallbacks in \eqref{eq:fallback}. Two mechanisms discount a marginal, answering different questions: the shrinkage \eqref{eq:mshrink} asks whether a bucket has enough data to be believed, while $\hat\psi_k$ asks whether what it says has already been said. Both degrade gracefully, since an attribute with little support has vanishing shrinkage weight, so its lift drops out and the estimate falls back to $\logit\hat p_z$.

At the leaf the rate alone does not determine the posterior: by \eqref{ptr2} the update requires the effective counts, not just the point estimate. Their support cannot be the sum $\sum_k n_{c,k}(d_k)$, since the buckets $d_1,\dots,d_K$ all contain the same impressions and summing counts each of them as many as $K$ times. What the marginals do supply is a bound: every impression in the cell lies in each of those buckets, so $n_{c,s}\le n_{c,k}(d_k)$ for every $k$. We take the support to be that bound,
\begin{equation}\label{eq:neff}
    n_{\eff}=\min\big\{\,n_{c,k}(d_k)\;:\;1\le k\le K,\ n_{c,k}(d_k)>0\,\big\},
\end{equation}
with $n_{\eff}=0$ when no attribute has support, so a never-served attribute is ignored and the support is set by the scarcest one actually used. Any larger figure would credit the cell with more evidence than the least-served attribute defining it could supply. Together, $n_{\eff}$ and $\hat\mu_{c,s}$ give the leaf's effective play and no-play counts,
\begin{equation}\label{eq:effcounts}
    y_{c,s}=n_{\eff}\,\hat\mu_{c,s},\qquad f_{c,s}=n_{\eff}\,(1-\hat\mu_{c,s}).
\end{equation}

The prior strength $\kappa_c$, the weights $\hat\psi$ and the prior mean $\mu$ all drift slowly and are computed offline. Since $\kappa_c$ does not depend on the segment we estimate it from the full-segment counters, and $\hat\psi$ is a single $K$-vector fetched once per request; the prior mean does depend on the segment, so to keep it cheap to fetch under a single lookup key we precompute the single-attribute counters and form $\hat p_z$ and $\delta_{z,k}(d_k)$ in advance. The only work left online is the $K$-term weighted sum \eqref{eq:recon} and one Beta draw per candidate. Substituting \eqref{eq:effcounts} and the offline prior into \eqref{ptr} yields the serve-time posterior, whose mean shrinks $\hat\mu_{c,s}$ toward $\mu$ by the factor $\kappa_c/(\kappa_c+n_{\eff})$ and reduces to $\mu$ exactly when no marginal supports the cell.

\subsection{Exploratory Creative Pre-Ranking}\label{sec:ts} Sections~\ref{sec:mse}--\ref{sec:realtime} give, for every candidate creative $c$ and the request segment $s$, a serve-time posterior with effective counts from \eqref{eq:effcounts},
\begin{equation}\label{eq:servepost}
\begin{gathered}
\alpha_{c,s}=\kappa_c\mu+n_{\eff}\,\hat\mu_{c,s},\\
\beta_{c,s}=\kappa_c(1-\mu)+n_{\eff}(1-\hat\mu_{c,s}).
\end{gathered}
\end{equation}
We turn each posterior into a score by \emph{Thompson sampling} \citep{thompson1933likelihood,chapelle2011empirical}: for every candidate we draw
\begin{equation}\label{eq:draw}
\tilde\theta_{c,s}\sim\mathrm{Beta}(\alpha_{c,s},\beta_{c,s}),
\end{equation}
independently across candidates and afresh on every request, and take that draw as the creative's score. Scoring by a draw rather than by the posterior mean makes exploration automatic and proportional to uncertainty: a fresh creative has a diffuse posterior and is occasionally sampled high, so it is tried, while a well-measured under-performer has a tight posterior and is rarely shown. On cold start the posterior collapses to the hierarchical prior $\mathrm{Beta}(\kappa_c\mu,\kappa_c(1-\mu))$, so exploration begins from the best available fallback \eqref{eq:fallback} rather than from an uninformative guess, and because the draws use the segment-specific parameters~\eqref{eq:servepost} the exploration is itself segment-aware: a creative may be well determined on the traffic it has received and still be explored on an audience it has not yet reached.

Pre-ranking is then immediate. Every creative carried past this point is one the downstream ranker must score with a heavy per-user model, so what the system commits to is, for each recalled campaign $g$ with creative pool $\Ccal_g$, a shortlist of the $E_g$ creatives with the largest draws,
\begin{equation}\label{eq:topk}
\begin{gathered}
\Scal_g=\argmax_{\Scal\subseteq\Ccal_g,\ |\Scal|=E_g}\ \sum_{c\in\Scal}\tilde\theta_{c,s},\quad
E_g=\max\big(1,\lfloor\rho\,|\Ccal_g|\rfloor\big),
\end{gathered}
\end{equation}
so a campaign's quota is a fixed fraction $\rho\in(0,1]$ of the creatives it has uploaded: a large pool, which needs more exploring, receives a longer shortlist, while the floor at one keeps every recalled campaign represented. The fraction is the only knob, and it trades ranker cost against how much of the pool survives.

What the latency budget really constrains is the total number of creatives reaching the ranker, so the platform also caps that total at $B$. When $\sum_g E_g$ exceeds $B$, we fill $B$ by \emph{round robin}: campaigns take one slot each on the first pass, then cycle for further slots while they stay below quota, until $B$ is exhausted or no campaign wants more. This is the cheap way to make the cap fair: it costs $O(B)$ with no global sort or cross-campaign score comparison, and no advertiser gains share by uploading a larger pool. Within a campaign the slots are still filled by the highest draws, so how the budget is divided does not affect which creatives a campaign explores.

With $E_g=1$ the rule is one round of Thompson sampling in a Beta--Bernoulli bandit whose arms are the campaign's creatives, so the classical regret guarantees apply campaign by campaign \citep{agrawal2012analysis,daniel2018tutorial}, and $E_g>1$ is the multiple-play analogue. Our contribution is not a bandit analysis but the posterior being sampled: hierarchical, reconstructed from marginal counters, and cheap enough to sample for every creative of every recalled campaign, which is what lets a pre-ranker explore at all. Algorithm~\ref{alg:ts} collects the whole method.

\begin{algorithm}[!ht]
\caption{Exploratory Creative Pre-Ranking by Hierarchical Empirical-Bayes Thompson Sampling.}\label{alg:ts}

\begin{algorithmic}[1]
\parState{\textbf{Input:} Request segment $s=(d_1,\dots,d_K)$, recalled campaigns $g$ with creative pools $\Ccal_g$, quota fraction $\rho\in(0,1]$, latency cap $B$.}
\State \textbf{Offline} (batch pipeline, once per refresh window):
    \parState{Fit the level concentrations $\kappa_s,\kappa_u,\kappa_g,\kappa_c$ and the per-attribute $\kappa_k$ from aggregated logs by \eqref{eq:sigmahat}.}
    \parState{Fit the pooling weights $\hat\psi$ on joint node-by-segment cells by \eqref{eq:psifit}, clipped to $[0,1]^K$.}
    \parState{Publish one row per node $z$ above the creative: its rate $\hat p_z$ and shrunk marginal lifts $\delta_{z,k}(v)$ from \eqref{eq:mshrink}--\eqref{eq:lift}.}
\State \textbf{Online} (per request):
\For{$g$ among the recalled campaigns}
    \For{$c\in\Ccal_g$}
    \parState{Set the prior mean $\mu\leftarrow\hat\mu_z$ of \eqref{eq:recon} at the finest node $z$ with support, backing off by \eqref{eq:fallback}.}
    \parState{Compute the real-time rate $\hat\mu_{c,s}$ and the support $n_{\eff}$ from the creative's marginals by \eqref{eq:recon} and \eqref{eq:neff}.}
    \State Draw the score
    \[\quad \quad \quad \quad \ \ \begin{aligned}\tilde\theta_{c,s}\sim\mathrm{Beta}\big(&\kappa_c\mu+n_{\eff}\hat\mu_{c,s},\kappa_c(1-\mu)+n_{\eff}(1-\hat\mu_{c,s})\big).\end{aligned}\]
    \EndFor
    \parState{Set the quota $E_g = \max(1,\lfloor\rho|\Ccal_g|\rfloor)$ and let $\Scal_g$ be the $E_g$ creatives of $\Ccal_g$ with the largest draws, as in \eqref{eq:topk}.}
\EndFor
\If{$\sum_g E_g > B$}
    \parState{Reduce $\{\Scal_g\}_g$ to $B$ creatives by round robin over campaigns, one slot each per pass.}
\EndIf
\parState{\Return The shortlists $\{\Scal_g\}_g$ passed to the ad ranker.}
\end{algorithmic}
\end{algorithm}

\section{Experiments}\label{sec:exp}

We evaluate Algorithm~\ref{alg:ts} by replaying logged Sponsored-ads traffic from Roblox. For each logged request we recover the full set of recalled creatives, rebuild each one's serve-time posterior $(\alpha_{c,s},\beta_{c,s})$ of~\eqref{eq:servepost} from the real-time single-attribute counters the serving path reads, and re-run every competing rule over that same candidate set. The counter state is the one the deployed system held, and nothing recorded after a request enters its scores, so the replay exercises the method on what production knew at request time rather than on a retrospective aggregate. Appendix~\ref{app:state} states how we reconstruct that state. Two questions are at issue, and they pull against each other: whether sampling the hierarchical posterior exploits the information in the counts while exploring, and whether that sampling is cheap enough to run for every recalled creative inside the request. Appendix~\ref{app:exp} collects the experimental details the main text omits.

\subsection{Data}\label{sec:exp:data}

\rev{The replay uses logged Sponsored-ads requests drawn at random from hourly partitions of production traffic, a few hundred of them, which together contribute more than $10^5$ recalled campaigns, each counted once per request that recalls it and carrying its own creative pool.} The extraction applies a single cap to each request's recalled set. At most $L=500$ campaigns are kept, drawn uniformly at random and kept with their full creative pools. This cap is ours rather than production's, and it keeps the cost of the replay proportional to the number of requests rather than to pool depth. Table~\ref{tab:data} in Appendix~\ref{app:state} reports the resulting sample; that appendix states how the replay matches production serving.

Two properties shape the results. \rev{A randomly sampled request carries just under $L$ campaigns and more than $1{,}000$ creatives}, about $2.6$ per campaign, so the pre-ranker is choosing roughly one of them, and such shallow pools bound how much any rule can win. A $16.9\%$ share of candidates have $n_{\eff}=0$, so a sixth of the traffic tests the fallback ladder~\eqref{eq:fallback} rather than the counters. The pool sizes are heavily concentrated, with over half of contested pools holding two creatives and about three in ten holding three, and a tail reaching $20$, so the quota mostly resolves to one or two slots and the mean shortlist width at $\rho=0.8$ is only $1.209$.

\subsection{Setup and Compared Methods}\label{sec:exp:setup}

A \emph{pre-ranking rule} must commit, before any ranking model runs, to a shortlist $\Scal_g\subseteq\Ccal_g$ for every recalled $g$, subject to $\sum_g|\Scal_g|\le B$. We compare five rules. \textsc{Uniform} forwards one creative per campaign, drawn at random; it needs no per-creative statistics and is the reference any pre-ranker has to beat. \textsc{TS-One} also forwards one per campaign, the best of one Thompson draw~\eqref{eq:draw} per candidate. \textsc{RR-Random} takes the quota $E_g=\max(1,\lfloor\rho|\Ccal_g|\rfloor)$ and the round-robin reduction of Algorithm~\ref{alg:ts} but fills the slots at random. \textsc{TS-RR} is Algorithm~\ref{alg:ts} in full. \textsc{Oracle} forwards every recalled creative; it more than doubles the heavy ranker's work and so cannot be served, but it upper-bounds what any pre-ranker could achieve.

The four servable rules vary two things separately, how many slots a campaign gets and which creatives fill them. The quota and the round-robin reduction depend only on the pool sizes $|\Ccal_g|$, on $\rho$ and on $B$, never on a score, so \textsc{Uniform} and \textsc{TS-One} fill an \emph{identical} allocation of slots on every request, as do \textsc{RR-Random} and \textsc{TS-RR}. Within either pair only the score differs, so any difference in quality is attributable to the posterior and not to a larger share of the ranker's budget. Between the pairs only the slot count differs. We report $\rho=0.8$ and $B=600$ throughout. Appendix~\ref{app:impl} collects the implementation details, from why this cap is the operating point that puts the round robin to work to how the quality figures and the timings are produced.

\subsection{Evaluation Metrics}\label{sec:exp:metrics}

A pre-ranker is judged not by the creative it likes best but by the set it hands on, and how that set is consumed is fixed by ad ranking, which scores the creatives it receives and keeps at most one rendering of each advertised universe, which is what the auction receives (Figure~\ref{fig:pipeline}). A shortlist is therefore worth the best creative it contains, and we take its reward to be the best per-user $\PTR$ prediction among its members,
\begin{equation}\label{eq:exprew}
R(\Scal_g)=\max_{c\in\Scal_g} \widehat{\PTR}(c,u_t),
\end{equation}
where $\widehat{\PTR}(c,u_t)$ is the play-through rate predicted by the most accurate deployed ranking model for creative $c$ and the requesting user $u_t$, logged for every recalled candidate. Appendix~\ref{app:models} describes the model and reports robustness checks; Section~\ref{sec:exp:models} measures its cost. A rule that forwards a pool's best creative attains the pool's maximum; one that drops it cannot be rescued downstream. We average \eqref{eq:exprew} over \emph{contested} pools ($|\Ccal_g|\ge2$), since on a single-creative campaign every rule returns the same creative and including such pools would dilute all contrasts by a common factor. Absolute play-through rates are small and platform-specific, so with $\bar R(\cdot)$ that average we summarize each rule $\pi$ by
\begin{equation}\label{eq:expgap}
\begin{gathered}
\mathrm{Lift}(\pi)=\frac{\bar R(\pi)}{\bar R(\textsc{Uniform})}-1,\\[2pt]
\mathrm{Gap}(\pi)=\frac{\bar R(\pi)-\bar R(\textsc{Uniform})}{\bar R(\textsc{Oracle})-\bar R(\textsc{Uniform})}.
\end{gathered}
\end{equation}
The gap is the more informative of the two: it is $0$ for \textsc{Uniform} and $1$ for forwarding everything, so it states directly what fraction of the headroom a pre-ranker recovers while still respecting the cap. Thompson draws are averaged over $200$ replications per candidate, whose Monte-Carlo standard error is at most $5.5\times10^{-7}$; sampling error over traffic is a request-clustered nonparametric bootstrap with $5{,}000$ resamples.

\subsection{The Cost of Production Ranking}\label{sec:exp:models}

The pre-ranker exists to protect the production model that follows it, which we run locally at the version serving at the time of the replay. We use the platform's most accurate deployed ranking model both to evaluate selection quality and to measure downstream cost. \conf{Most of its inputs are per-candidate}, so each creative enters through many distinct feature paths and its cost grows in proportion to the shortlist. The cap $B$ therefore protects this model: \rev{a 
randomly sampled request carries more than $1{,}000$ creatives on average}, so forwarding the full pool more than doubles its work against the shortlists we report. Appendix~\ref{app:models} describes the model and reports an additional quality check.

\subsection{Results}\label{sec:exp:perf}

\paragraph{Selection quality.} Table~\ref{tab:main} reports predicted play-through rate under the most accurate deployed ranking model. Extra slots alone are worth little. \textsc{RR-Random} forwards the $600$ creatives the cap allows, a $21\%$ increase in ranker load over \textsc{Uniform}, and returns $+2.80\%$, closing $15.3\%$ of the gap. The posterior alone is worth roughly twice as much at no extra ranker load: \textsc{TS-One} forwards exactly as many creatives as \textsc{Uniform}\rev{, one per recalled campaign,} and returns $+6.09\%$, closing $33.2\%$. Combining them, Algorithm~\ref{alg:ts} returns $+8.14\%$ ($95\%$ CI $[7.64,8.65]$) and closes $44.3\%$ of the distance to forwarding everything. All bootstrap intervals exclude zero and, more to the point, those for \textsc{TS-RR} and \textsc{RR-Random} are disjoint, so at an identical slot allocation the posterior is doing work that extra capacity cannot buy. Ranking by the posterior mean instead of one draw returns $+9.51\%$ (\textsc{Mean-RR}, Table~\ref{tab:main}), so exploration costs $1.37$ points of lift in this frozen replay; Appendix~\ref{app:explore} gives the decomposition and why the replay understates its value. Averaging over requests rather than over pools leaves every lift unchanged to two decimal places, and the closed-form expectations available for \textsc{Uniform} and \textsc{RR-Random} agree with the sampler to $3.9\times10^{-5}$ and $3.1\times10^{-5}$ relative.

\begin{table}[t] \centering \caption{Predicted play-through rate on contested pools, $\rho=0.8$, $B=600$. Lifts are against \textsc{Uniform} with $95\%$ request-clustered bootstrap intervals; lift and gap follow~\eqref{eq:expgap}. \emph{Forwarded} is the mean creatives per request reaching the ranker.} \label{tab:main}
\begin{tabular}{lrcr}
\toprule
Method & Forwarded & Lift ($95\%$ CI) & Gap \\
\midrule
\textsc{Uniform}   & \rev{$\approx\!L$}     & ---                       & $0.0\%$ \\
\textsc{RR-Random} & $600.0$     & $+2.80\ [2.71,\,2.90]$    & $15.3\%$ \\
\textsc{TS-One}    & \rev{$\approx\!L$}     & $+6.09\ [5.62,\,6.58]$    & $33.2\%$ \\
\textsc{TS-RR}     & $600.0$     & $+8.14\ [7.64,\,8.65]$    & $44.3\%$ \\
\abl{\textsc{Mean-RR}}  & \abl{$600.0$} & \abl{$+9.51\ [8.87,\,10.18]$} & \abl{$51.8\%$} \\
\textsc{Oracle}    & \rev{$>1{,}000$} & $+18.36\ [17.78,\,18.97]$ & $100\%$ \\
\bottomrule
\end{tabular}
\end{table}

\paragraph{Computation time.} Table~\ref{tab:time} reports per-request cost. Scoring every recalled creative takes $79$\,\textmu s, and the full pre-ranker, scoring and shortlisting together, takes $2.2$\,ms. The heavy ran
ker dominates total cost and grows with the shortlist: forwarding $100$ more creatives adds $6.4$\,ms. The pre-ranker therefore costs less than the ranker spends on those extra creatives.

\begin{table}[t] \centering \caption{Per-request computation time. \emph{Score} and \emph{select} are the pre-ranker's two timed phases, and \emph{total} adds the ranker the cap protects. \textsc{Oracle}'s ranker entry is extrapolated past the largest shortlist measured.} \label{tab:time}
\begin{tabular}{@{}lrrrrr@{}}
\toprule
& & \multicolumn{2}{c}{Pre-ranker ($\mu$s)} & \conf{Ranker} & Total \\
\cmidrule(lr){3-4}
Method & Fwd. & Score & Select & \conf{(ms)} & (ms) \\
\midrule
\textsc{Uniform}   & \rev{$\approx\!L$}   & $0.1$  & $14.8$      & \conf{$33.0$} & $33.0$ \\
\textsc{RR-Random} & $600.0$   & $6.6$  & $2{,}109.9$ & \conf{$39.4$} & $41.5$ \\
\textsc{TS-One}    & \rev{$\approx\!L$}   & $78.0$ & $2{,}071.0$ & \conf{$33.0$} & $35.1$ \\
\textsc{TS-RR}     & $600.0$   & $79.2$ & $2{,}122.5$ & \conf{$39.4$} & $41.6$ \\
\midrule
\textsc{Oracle}    & \rev{$>1{,}000$} & $0.1$ & $3.0$     & \conf{$83.1$} & $83.1$ \\
\bottomrule
\end{tabular}
\end{table}

Read down the last column: \textsc{TS-RR} costs $41.6$\,ms against \textsc{RR-Random}'s $41.5$\,ms, a $0.2\%$ difference, for $+8.14\%$ against $+2.80\%$ in predicted play-through rate. Nearly triple the quality gain for a fifth of a percent more compute is the practical case for the method, and \textsc{TS-One} sharpens it: at $35.1$\,ms it costs $15\%$ \emph{less} than the score-free round robin, forwards no more creatives than \textsc{Uniform}, and still doubles its lift. \textsc{Oracle}'s $+18.36\%$, by contrast, would require $83.1$\,ms and \rev{the entire recalled pool} at the ranker, $2.0\times$ the cost of Algorithm~\ref{alg:ts}; recovering $44\%$ of that headroom on more than $40\%$ fewer creatives is what the pre-ranker buys.

\paragraph{Sensitivity.} Table~\ref{tab:sweep} sweeps the quota fraction against the latency cap under another deployed ranking model, described in Appendix~\ref{app:models}. It is the cheaper of the two and the one we use as a robustness check, so the sweep both varies the operating point and repeats it away from the headline model. When the cap binds hard the round robin has no slack and \textsc{RR-Random} returns essentially nothing while the posterior still delivers $+3.34\%$, so exploration is what survives a tight budget; as the cap loosens both improve and the posterior keeps a roughly constant margin; and at $\rho=1.0$ uncapped both coincide at $+10.31\%$, reproducing that model's \textsc{Oracle} row. The ranking of methods is unchanged from Table~\ref{tab:main}.

\begin{table}[t] \centering \caption{Quota fraction $\rho$ against latency cap $B$ under another deployed ranking model, with $50$ draws per candidate. Lifts are against \textsc{Uniform} and \emph{gap} is \textsc{TS-RR}'s, as in~\eqref{eq:expgap}. \rev{\emph{Forwarded} is in multiples of the one-creative-per-campaign shortlist.}} \label{tab:sweep}
\begin{tabular}{@{}rrrrrr@{}}
\toprule
$\rho$ & $B$ & \rev{Fwd.\ ($\times L$)} & \textsc{RR-Random} & \textsc{TS-RR} & Gap \\
\midrule
$0.2$ & $600$      & \rev{$1.00$}   & $+0.02\%$ & $+3.34\%$ & $32.5\%$ \\
$0.4$ & $600$      & \rev{$1.03$}   & $+0.26\%$ & $+3.52\%$ & $34.2\%$ \\
$0.6$ & $600$      & \rev{$1.15$}   & $+1.14\%$ & $+4.17\%$ & $40.5\%$ \\
$0.8$ & $500$      & \rev{$1.01$}   & $+0.04\%$ & $+3.39\%$ & $33.0\%$ \\
$0.8$ & $600$      & \rev{$1.21$}   & $+1.62\%$ & $+4.52\%$ & $44.0\%$ \\
$0.8$ & $800$      & \rev{$1.57$}   & $+4.09\%$ & $+6.29\%$ & $61.2\%$ \\
$0.8$ & $\infty$   & \rev{$1.64$}   & $+4.40\%$ & $+6.48\%$ & $63.0\%$ \\
$1.0$ & $600$      & \rev{$1.21$}   & $+1.52\%$ & $+4.42\%$ & $42.9\%$ \\
$1.0$ & $800$      & \rev{$1.61$}   & $+4.71\%$ & $+6.65\%$ & $64.5\%$ \\
$1.0$ & $\infty$   & \rev{$2.64$}   & $+10.31\%$ & $+10.31\%$ & $100\%$ \\
\bottomrule
\end{tabular}
\end{table}

\section{Conclusion}\label{sec:conclusion}

We studied creative pre-ranking, a stage that must score every recalled creative inside the request and that, by choosing what the ranker sees, also chooses what the system will ever observe. Our method estimates how much a sparse creative should borrow from its parents rather than tuning it, reconstructs the per-segment posterior from counters that stay dense, one attribute at a time, rather than from the joint counters over attribute combinations, which grow exponentially and are hard to maintain, and selects by a single Thompson draw. On replayed Roblox traffic it recovers close to three times the improvement an equally sized random shortlist obtains, at $0.2\%$ additional serving cost: the exploration a pre-ranker needs is negligible beside the per-candidate cost of the ranker it protects. After an A/B test on live traffic, it now serves production at Roblox. The same construction applies wherever a ranker sees a shortlist drawn from sparse, hierarchically organized items, such as cold-start items in web search and recommendation. Learning the residual left by the linear log-odds reconstruction and allocating slots by expected information gain are natural next steps.



\newpage

\newpage

\baselineskip=17pt \bibliography{dco_final}

\newpage

\appendix

\section{Proofs and Offline Fitting}\label{app:proofs}

\subsection{Proof of Theorem~\ref{thm:mse}}\label{app:mse} \begin{proof} Throughout, $\theta_i$ is a \emph{random variable} (its randomness is the between-child variation posited in~\eqref{eq:remodel}), and every expectation is taken over both $\theta_i$ and, given $\theta_i$, the Binomial draw $y_i$. We use two conditional facts: given $\theta_i$, $y_i\sim\mathrm{Binomial}(n_i,\theta_i)$ has $\E[y_i\mid\theta_i]=n_i\theta_i$ and $\Var(y_i\mid\theta_i)=n_i\theta_i(1-\theta_i)$, so $\hat p_i=y_i/n_i$ satisfies
\begin{equation}\label{eq:condmoments}
\E[\hat p_i\mid\theta_i]=\theta_i,\qquad
\Var(\hat p_i\mid\theta_i)=\frac{\theta_i(1-\theta_i)}{n_i}.
\end{equation}

Write the error as $\hat\theta_i-\theta_i=w(\hat p_i-\theta_i)+(1-w)(\mu-\theta_i)$ and expand:
\begin{align*}
\E[(\hat\theta_i-\theta_i)^2]
&= w^2\,\E[(\hat p_i-\theta_i)^2]\\
&\quad+2w(1-w)\,\E[(\hat p_i-\theta_i)(\mu-\theta_i)]\\
&\quad+(1-w)^2\,\E[(\mu-\theta_i)^2].
\end{align*}
The cross term vanishes: conditionally on $\theta_i$ the factor $(\mu-\theta_i)$ is fixed and $\E[\hat p_i-\theta_i\mid\theta_i]=0$ by \eqref{eq:condmoments}, so the tower rule gives $\E[(\hat p_i-\theta_i)(\mu-\theta_i)] =\E\big[(\mu-\theta_i)\,\E[\hat p_i-\theta_i\mid\theta_i]\big]=0$. The same zero conditional mean makes the conditional second moment of $\hat p_i-\theta_i$ equal its conditional variance $\theta_i(1-\theta_i)/n_i$, so averaging over $\theta_i$ and using $\E[\theta_i^2]=\Var(\theta_i)+(\E\theta_i)^2=\sigma^2+\mu^2$,
\begin{align*}
\E[(\hat p_i-\theta_i)^2]
&=\frac{\E[\theta_i(1-\theta_i)]}{n_i}
=\frac{\E[\theta_i]-\E[\theta_i^2]}{n_i}\\
&=\frac{\mu-(\sigma^2+\mu^2)}{n_i}
=\frac{\mu(1-\mu)-\sigma^2}{n_i},
\end{align*}
while the remaining term is $\E[(\mu-\theta_i)^2]=\Var(\theta_i)=\sigma^2$. Hence \[ \E[(\hat\theta_i-\theta_i)^2] =w^2\,\frac{\mu(1-\mu)-\sigma^2}{n_i}+(1-w)^2\sigma^2, \] a convex quadratic in $w$ whose derivative vanishes at
\begin{align*}
w^\star&=\frac{\sigma^2}{\sigma^2+(\mu(1-\mu)-\sigma^2)/n_i}=\frac{n_i}{n_i+\kappa},\\
\kappa&=\frac{\mu(1-\mu)-\sigma^2}{\sigma^2}=\frac{\mu(1-\mu)}{\sigma^2}-1,
\end{align*}
which is \eqref{eq:wstar}; here $\kappa\ge0$ because any $[0,1]$-valued rate satisfies $\sigma^2=\Var(\theta_i)\le \mu(1-\mu)$. Substituting $w^\star$ into \eqref{eq:shrink} gives $\hat\theta_i=w^\star\hat p_i+(1-w^\star)\mu=(\kappa\mu+y_i)/(\kappa+n_i)$. This completes the proof of Theorem~\ref{thm:mse}. \end{proof}

\subsection{The Scatter Decomposition}\label{app:scatter}

\begin{theorem}[Impression-weighted scatter decomposition]\label{thm:scatter} For a parent with children $[C]=\{1,\dots,C\}$, let $N=\sum_{i=1}^{C} n_i$, let $\hat\mu=\big(\sum_{i=1}^{C} y_i\big)/N$ be the pooled rate, and let $Q=\sum_{i=1}^{C} n_i(\hat p_i-\hat\mu)^2$ be the impression-weighted scatter of the children. Under Assumption~\ref{ass:re} with parent mean $\mu$ and variance $\sigma^2$, the expectation of $Q$ is given by~\eqref{eq:scatter}. \end{theorem}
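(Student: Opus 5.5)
The plan is the standard ANOVA-style identity. First rewrite $Q$ so that it involves only squared totals:
\[
Q=\sum_i n_i\hat p_i^2-N\hat\mu^2=\sum_i \frac{y_i^2}{n_i}-\frac{\big(\sum_i y_i\big)^2}{N},
\]
using $\sum_i n_i\hat p_i=N\hat\mu$. This reduces $\E[Q]$ to two second moments: $\E[y_i^2]$ for each child, and $\E[(\sum_i y_i)^2]$. Both can be computed from Assumption~\ref{ass:re} by conditioning on $\theta_i$ and applying the tower rule, exactly as in the proof of Theorem~\ref{thm:mse}.

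\textbf{Per-child moments.} From the Binomial conditional moments,
\[
\E[y_i^2\mid\theta_i]=n_i\theta_i(1-\theta_i)+n_i^2\theta_i^2 .
\]
Using $\E[\theta_i(1-\theta_i)]=\mu(1-\mu)-\sigma^2$ (already derived in Appendix~\ref{app:mse}) and $\E[\theta_i^2]=\sigma^2+\mu^2$, this gives
\[
\E[y_i^2]/n_i=\big(\mu(1-\mu)-\sigma^2\big)+n_i(\sigma^2+\mu^2).
\]
Summing over $i$,
\[
\E\Big[\sum_i y_i^2/n_i\Big]=C\big(\mu(1-\mu)-\sigma^2\big)+N(\sigma^2+\mu^2).
\]

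\textbf{Second moment of the total.} Independence across children (Assumption~\ref{ass:re}) gives $\Var(\sum_i y_i)=\sum_i\Var(y_i)$. By the law of total variance,
\[
\Var(y_i)=\E[\Var(y_i\mid\theta_i)]+\Var(\E[y_i\mid\theta_i])=n_i\big(\mu(1-\mu)-\sigma^2\big)+n_i^2\sigma^2 .
\]
Since $\E[\sum_i y_i]=N\mu$,
\[
\E\Big[\big(\textstyle\sum_i y_i\big)^2\Big]/N=\big(\mu(1-\mu)-\sigma^2\big)+\sigma^2\frac{\sum_i n_i^2}{N}+N\mu^2 .
\]

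\textbf{Subtract.} In the difference the $N\mu^2$ terms cancel. The $\mu(1-\mu)-\sigma^2$ terms leave the coefficient $C-1$, and the $\sigma^2$ terms leave $\sigma^2\big(N-\sum_i n_i^2/N\big)$. This is exactly \eqref{eq:scatter}.

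None of these steps is a genuine obstacle. The one point needing care is the moment of the total: the cross terms $\E[y_iy_j]$ for $i\neq j$ must factor as $n_in_j\mu^2$. This is where the \emph{joint} independence of the pairs $(\theta_i,y_i)$ is used, not merely conditional independence of the $y_i$ given the $\theta_i$. It is also why the $\sum_i n_i^2/N$ term appears only through the variance and not through any covariance. I will state this explicitly.
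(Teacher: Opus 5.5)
Your proof is correct and is essentially the paper's argument. The paper centers at $\mu$ rather than at $0$, writing $Q=\sum_i n_i(\hat p_i-\mu)^2-N(\hat\mu-\mu)^2$ so that $\E[Q]=\sum_i n_i\Var(\hat p_i)-N\Var(\hat\mu)$, and then evaluates both terms with the law of total variance and independence across children, exactly as you do; your uncentered version only adds the $N\mu^2$ terms, which cancel.
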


\begin{proof} Recall that $\E[\hat p_i]=\mu$ for every child. By the law of total variance and the conditional moments \eqref{eq:condmoments},
\begin{align*}
\Var(\hat p_i)&=\Var(\theta_i)+\E[\Var(\hat p_i\mid\theta_i)]\\
&=\sigma^2+\frac{\E[\theta_i(1-\theta_i)]}{n_i}
=\sigma^2+\frac{\mu(1-\mu)-\sigma^2}{n_i},
\end{align*}
using $\E[\theta_i(1-\theta_i)]=\mu(1-\mu)-\sigma^2$ as in Theorem~\ref{thm:mse}. With $\hat\mu=\big(\sum_i n_i\hat p_i\big)/N$, the weighted identity $Q=\sum_i n_i(\hat p_i-\mu)^2-N(\hat\mu-\mu)^2$ gives $\E[Q]=\sum_i n_i\Var(\hat p_i)-N\Var(\hat\mu)$, and it remains to evaluate the two terms. The first is $\sum_i n_i\Var(\hat p_i)=\sum_i\big[\sigma^2 n_i+\mu(1-\mu)-\sigma^2\big] =\sigma^2 N+C\big(\mu(1-\mu)-\sigma^2\big)$. For the second, the children are independent by Assumption~\ref{ass:re}, so \[ \Var(\hat\mu)=\frac1{N^2}\sum_i n_i^2\Var(\hat p_i) =\frac{\sigma^2\sum_i n_i^2+N\big(\mu(1-\mu)-\sigma^2\big)}{N^2}, \] hence $N\Var(\hat\mu)=\sigma^2\big(\sum_i n_i^2\big)/N+\mu(1-\mu)-\sigma^2$. Subtracting, the $\mu(1-\mu)-\sigma^2$ terms leave $C-1$ copies and the $\sigma^2$ terms leave $N-\sum_i n_i^2/N$: \[ \E[Q]=(C-1)\big(\mu(1-\mu)-\sigma^2\big)+\sigma^2\Big(N-\frac{\sum_i n_i^2}{N}\Big), \] which is \eqref{eq:scatter}. This completes the proof of Theorem~\ref{thm:scatter}. \end{proof}

\subsection{Proof of Corollary~\ref{cor:sigma}}\label{app:sigma} \begin{proof} Fix a parent $j$. Its children satisfy Assumption~\ref{ass:re} with the parent's own mean $\mu_j$ and the shared variance $\sigma^2$, so Theorem~\ref{thm:scatter} applies. Collecting the $\sigma^2$ terms in \eqref{eq:scatter},
\begin{align*}
&(C_j-1)\big(\mu_j(1-\mu_j)-\sigma^2\big)+\sigma^2\Big(N_j-\tfrac{\sum_i n_{ji}^2}{N_j}\Big)\\
&\qquad=(C_j-1)\,\mu_j(1-\mu_j)+\sigma^2 D_j,\\
&\qquad D_j=N_j-\frac{\sum_{i=1}^{C_j} n_{ji}^2}{N_j}-(C_j-1),
\end{align*}
so $\E[Q_j]=(C_j-1)\mu_j(1-\mu_j)+\sigma^2 D_j$. Summing over the $P$ parents, whose scatters are independent because the children are independent across parents, \[ \E\Big[\textstyle\sum_{j=1}^{P} Q_j\Big] =\sum_{j=1}^{P}(C_j-1)\,\mu_j(1-\mu_j)+\sigma^2\sum_{j=1}^{P} D_j . \] Since $\sum_j D_j$ is deterministic, the known-mean estimator $\tilde\sigma^2=\big(\sum_j Q_j-\sum_j(C_j-1)\mu_j(1-\mu_j)\big)/\sum_j D_j$ has $\E[\tilde\sigma^2]=\sigma^2$, i.e.\ it is unbiased. The implemented form~\eqref{eq:sigmahat} replaces each unknown $\mu_j$ by $\hat\mu_j$ and clips at $0$; since $\E[\hat\mu_j(1-\hat\mu_j)]=\mu_j(1-\mu_j)-\Var(\hat\mu_j)$ with $\Var(\hat\mu_j)=O(1/N_j)$, this adds an $O(1/N_j)$ bias per parent, and the clip acts only when $\hat\sigma^2$ would be negative; both vanish at the impression volumes where the prior is fit, leaving $\hat\sigma^2$ consistent for $\sigma^2$. This completes the proof of Corollary~\ref{cor:sigma}. \end{proof}

\subsection{Independence and Duplication}\label{app:limits}

\begin{proposition}[Independence and duplication]\label{prop:limits} Suppose the joint log-odds are additive across attributes, $\logit\theta_z(s)=\eta_z+\sum_{k}\gamma_{z,k}(d_k)$, and that each bucket is measured without error by the exposure-weighted average of the log-odds of the impressions it contains. \begin{enumerate}[label=(\roman*),leftmargin=*] \item If the attributes are mutually independent in the node's traffic, then \eqref{eq:pool} holds with $\psi\equiv\mathbf 1$, so \eqref{eq:naive} is exact. \item If $J$ of the attributes are duplicates of one another in the node's traffic, and the remaining $K-J$ are independent of them and of each other, then \eqref{eq:pool} holds with $\psi_k=1/J$ on the duplicated group and $\psi_k=1$ elsewhere. \end{enumerate} \end{proposition}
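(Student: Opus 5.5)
The plan is to work entirely on the log-odds scale, where the hypothesis makes each lift a conditional average of additive effects. Let $P_z$ be the node's traffic distribution over segments. Under the measurement hypothesis, the node's overall log-odds is $\logit\hat p_z=\eta_z+\sum_k\E[\gamma_{z,k}(D_k)]$, and the bucket $\{D_k=v\}$ reports $\eta_z+\sum_j\E[\gamma_{z,j}(D_j)\mid D_k=v]$. The measured-without-error idealization removes the shrinkage and the nonlinearity of the logit, so I read $\bar p_{z,k}(v)$ through its log-odds. Subtracting gives the key identity
\[
\delta_{z,k}(v)=\sum_{j=1}^{K}\Big(\E[\gamma_{z,j}(D_j)\mid D_k=v]-\E[\gamma_{z,j}(D_j)]\Big).
\]
I first normalize the effects by centring each $\gamma_{z,j}$ under $P_z$ and absorbing the means into $\eta_z$. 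Then $\E[\gamma_{z,j}(D_j)]=0$ and $\logit\hat p_z=\eta_z$, and the target \eqref{eq:pool} reduces to $\sum_k\gamma_{z,k}(d_k)=\sum_k\psi_k\delta_{z,k}(d_k)$ for every segment $s$.

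For (i), independence gives $\E[\gamma_{z,j}(D_j)\mid D_k=v]=0$ for $j\ne k$, so $\delta_{z,k}(v)=\gamma_{z,k}(v)$. Summing with $\psi\equiv\mathbf 1$ recovers the additive log-odds exactly.

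For (ii), I formalize ``duplicates'' as follows. The group $G$ with $|G|=J$ is a set of attributes each of which is a bijective relabelling of a common latent variable $D$ on the support of $P_z$, so conditioning on any $D_k=v$ with $k\in G$ fixes every $D_j$, $j\in G$. For $k\in G$, conditioning fixes the whole group, and the attributes outside $G$ contribute $0$ by independence. Hence
\[
\delta_{z,k}(d_k)=\sum_{j\in G}\gamma_{z,j}(d_j)=:\Gamma_G(s),
\]
the same value for every $k\in G$. For $k\notin G$, the argument of (i) gives $\delta_{z,k}=\gamma_{z,k}$, because both the group and the other attributes are independent of $D_k$. Then $\sum_{k\in G}(1/J)\,\Gamma_G(s)+\sum_{k\notin G}\gamma_{z,k}(d_k)$ equals the full additive sum on every segment in the support. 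Segments off the support have zero traffic and are immaterial.

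I expect the main obstacle to be making the hypotheses precise rather than any calculation. Three points need care. First, the ``exposure-weighted average of log-odds'' is what makes bucket log-odds linear in conditional expectations; I will state it as the defining idealization. Second, centring is needed so that $\logit\hat p_z=\eta_z$. Third, exact duplication, meaning a deterministic bijection on the support, must be distinguished from mere correlation, since only the former makes all group lifts coincide.

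The weights are not unique in (ii): any $\psi$ with $\sum_{k\in G}\psi_k=1$ works. The statement only claims that the symmetric choice $1/J$ works, which is the natural one under exchangeability within the group.
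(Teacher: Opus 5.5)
Your proof is correct and follows the paper's argument: each bucket's log-odds is written as a conditional average of the additive effects, independence collapses the cross terms to unconditional means in (i), and exact duplication makes all $J$ group lifts equal the group's centred total in (ii). The only difference is cosmetic: you centre the $\gamma_{z,j}$ at the start, so the $\bar\gamma_{z,j}$ and $\bar\Gamma_z$ terms that the paper carries explicitly vanish. Your remarks that off-support segments are immaterial and that $\psi$ is not unique (any weights summing to one on $G$ work) are correct additions.
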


\begin{proof} Let $\bar\gamma_{z,j}$ denote the exposure-weighted average of $\gamma_{z,j}(d_j)$ over the node's impressions, and write $E_k[\cdot\mid v]$ for the exposure-weighted average over the impressions with $d_k=v$, i.e.\ over bucket $(k,v)$. By hypothesis the measured log-odds of a bucket is the average of the log-odds it contains, so by additivity
\begin{equation}\label{eq:bucketavg}
\begin{gathered}
\logit\bar p_{z,k}(v)=\eta_z+\gamma_{z,k}(v)+\sum_{j\ne k}E_k\big[\gamma_{z,j}(d_j)\mid v\big],\\
\logit\hat p_z=\eta_z+\sum_{j=1}^{K}\bar\gamma_{z,j},
\end{gathered}
\end{equation}
where attribute $k$ is constant on its own bucket, hence contributes $\gamma_{z,k}(v)$ exactly.

\emph{(i)} If the attributes are mutually independent in the node's traffic, then conditioning on $d_k=v$ does not change the distribution of $d_j$ for $j\ne k$, so $E_k[\gamma_{z,j}(d_j)\mid v]=\bar\gamma_{z,j}$. Subtracting the two identities in \eqref{eq:bucketavg} gives $\delta_{z,k}(v)=\gamma_{z,k}(v)-\bar\gamma_{z,k}$: each lift isolates its own attribute's centered effect. Summing over the attributes of $s=(d_1,\dots,d_K)$ and adding $\logit\hat p_z$,
\begin{align*}
&\logit\hat p_z+\sum_{k=1}^{K}\delta_{z,k}(d_k)\\
&\quad=\eta_z+\sum_{j=1}^{K}\bar\gamma_{z,j}+\sum_{k=1}^{K}\big(\gamma_{z,k}(d_k)-\bar\gamma_{z,k}\big)\\
&\quad=\eta_z+\sum_{k=1}^{K}\gamma_{z,k}(d_k)=\logit\theta_z(s),
\end{align*}
which is \eqref{eq:pool} with $\psi\equiv\mathbf 1$.

\emph{(ii)} Let $G$ with $|G|=J$ be the duplicated group. Duplication means that within the node's traffic the values $\{d_j\}_{j\in G}$ determine one another, so they are indexed by a single common value $v$; write $\Gamma_z(v)=\sum_{j\in G}\gamma_{z,j}(d_j(v))$ for the group's total effect at that value and $\bar\Gamma_z$ for its exposure-weighted average. For $k\in G$, bucket $(k,v)$ consists exactly of the impressions whose group value is $v$, so the group contributes $\Gamma_z(v)$ there, while the remaining attributes, being independent of the group, contribute their unconditional averages; \eqref{eq:bucketavg} therefore gives \[ \delta_{z,k}(v)=\Gamma_z(v)-\bar\Gamma_z\quad\text{for every }k\in G, \] the \emph{same} value for all $J$ members, which is the double counting that \eqref{eq:naive} incurs. For $k\notin G$ the argument of part~(i) applies unchanged and yields $\delta_{z,k}(v)=\gamma_{z,k}(v)-\bar\gamma_{z,k}$. Hence, weighting the group by $1/J$ and the rest by $1$,
\begin{align*}
&\logit\hat p_z+\sum_{k\in G}\frac1J\,\delta_{z,k}(d_k)+\sum_{k\notin G}\delta_{z,k}(d_k)\\
&\quad=\eta_z+\Gamma_z(v)+\sum_{k\notin G}\gamma_{z,k}(d_k)
=\logit\theta_z(s),
\end{align*}
since $\logit\hat p_z=\eta_z+\bar\Gamma_z+\sum_{k\notin G}\bar\gamma_{z,k}$. This is \eqref{eq:pool} with the stated $\psi$ and completes the proof of Proposition~\ref{prop:limits}. \end{proof}

\subsection{Fitting the Pooling Weights}\label{app:psifit}

Fisher scoring solves \eqref{eq:psifit} with group-by sums alone: each pass accumulates $\sum n\pi(1-\pi)\,xx\transp$ and $\sum x\,(y-n\pi)$ over cells and applies one $K\times K$ solve. Iterating rather than stopping at the one-step least-squares version matters, because saturated cells carry extreme targets at ample impression weight and on our logs push the weakly identified attributes out of $[0,1]$, whereas the Binomial weights $n\pi(1-\pi)$ discount exactly those cells.

\section{Experimental Details}\label{app:exp}

\subsection{Serve-Time State}\label{app:state}

Production scores recalled creatives with the online steps of Algorithm~\ref{alg:ts}, and the replay runs those same steps on each logged candidate. Those steps read two sources, the hierarchical table published offline and the creative's own counters maintained online.

The hierarchical table is a batch product, and it stores parameters. A row gives one node above the creative its rate and its shrunk marginal lifts from \eqref{eq:mshrink}--\eqref{eq:lift}, each fit on a window of several days of impressions, and the same publication carries the concentrations of Corollary~\ref{cor:sigma} and the pooling weights $\hat\psi$ of \eqref{eq:psifit}. The pipeline republishes all of it on a regular schedule, so what a request reads was computed hours before that request arrived. The lag costs little, because every one of these quantities pools over many creatives and drifts slowly. The creative's own single-attribute counters move faster. They cover a short trailing window and advance with every impression the creative receives, so they carry what it has just done and are the only fast-moving input to the posterior. This is the split the method is built around, a slow prior refreshed offline and fast evidence refreshed online.

The replay reads both sources as serving had them. For every candidate it takes the counter values and the published row available at that moment, never a later state, so a creative is scored on the evidence its own recent traffic had supplied by then. Logged ranker predictions supply the rewards. Table~\ref{tab:data} summarizes the replay sample.

\begin{table}[t] \centering \caption{\rev{The replay sample, as shares and per-request or per-campaign rates rather than totals. A pool is a (request, campaign) pair, contested when the campaign has at least two recalled creatives.}} \label{tab:data}
\begin{tabular}{lr}
\toprule
\rev{Recalled pools that are contested ($|\Ccal_g|\ge2$)} & \rev{$99.1\%$} \\
\rev{Campaigns per request, mean}               & \rev{just under $L$} \\
\rev{Creatives per request, mean}               & \rev{$>1{,}000$} \\
Creatives per campaign, mean              & $2.637$ \\
Creatives per campaign, max               & $20$ \\
Cold-start candidates ($n_{\eff}=0$)      & \rev{$16.9\%$} \\
\bottomrule
\end{tabular}
\end{table}

\subsection{Operating Point and Implementation Details}\label{app:impl}

We report $\rho=0.8$ and $B=600$ as the headline configuration, and the choice of $B$ is deliberate. Two numbers bound the useful range. The lower one is a request's count of recalled campaigns, at most the sampling cap $L$ of Section~\ref{sec:exp:data}; the upper one is its uncapped quota $\sum_g E_g$, which at $\rho=0.8$ is the larger of the two. A cap above the quota would never trigger, leaving the round robin untested, while a cap below the campaign count would deny some campaign its only slot. Every configuration we report takes $B\ge L$, and $B=600$ falls between the two numbers, so the reduction binds while the guarantee $|\Scal_g|\ge1$ of Algorithm~\ref{alg:ts} still holds and the round robin only ever redistributes second and later slots, the regime it is designed for. In deployment $B$ follows the latency budget allotted to the ranker.

Quality is averaged over $200$ Monte-Carlo replications of the Thompson draws and bootstrapped with $5{,}000$ request-clustered resamples, all from a fixed seed. Timings are the mean over $25$ repeats per request. The pre-ranker is a NumPy reference implementation; the two rules that need no ranking of scores bypass the per-campaign selection loop entirely, which is why their shortlist phase is faster by more than the algorithm alone would suggest.

All timings are single-process wall-clock measurements taken on \conf{a two-socket x86-64 Linux server, with the ranking models running as CPU builds limited to $8$ threads}; \conf{production serves them on GPU}, which shifts the absolute level without changing the quantity the argument uses, namely how each model's cost \emph{scales} in the number of creatives forwarded. That scaling follows from how a candidate enters the model, so it carries over to the serving path. The ranker measurements time the forward pass on inputs already laid out as tensors, as in production, where features are assembled before the model runs.

\subsection{Production Ranking Models and Robustness}\label{app:models}

The main text uses the platform's most accurate deployed model, L2, for both quality and timing. \conf{L2 takes mostly per-candidate inputs: it bucketizes and embeds them, models user--item interaction, and uses aggregate features of the advertised experience covering engagement, quality and retention, monetization, multiplayer and chat activity, and audience composition.} It is a deep click-through-rate model of the now-standard kind \citep{wang2017deep, zhou2018deep}; because a candidate enters through many feature paths, its cost grows in proportion to the shortlist. It also deduplicates, emitting at most one creative per advertised universe, which is why reward~\eqref{eq:exprew} is a maximum over shortlist members.

We repeat the quality evaluation with a second deployed model, L1. \conf{Its inputs are almost entirely request- and user-level: personalization history over several windows and request context. Exactly \emph{one} is per-candidate, a hash of the ad identifier}, so the creative enters only through an embedding lookup. It is a deterministic point predictor with no exploration, so we use it only as a second quality check, not as a substitute for the pre-ranker. Despite its different architecture, Table~\ref{tab:robustness} preserves the ordering of every method and gives Algorithm~\ref{alg:ts} essentially the same gap closure as the main evaluation ($44.0\%$ against $44.3\%$).

\begin{table}[t] \centering \caption{Robustness under L1. The setup matches Table~\ref{tab:main}.} \label{tab:robustness}
\begin{tabular}{lrcr}
\toprule
Method & Forwarded & Lift ($95\%$ CI) & Gap \\
\midrule
\textsc{Uniform} & \rev{$\approx\!L$} & --- & $0.0\%$ \\
\textsc{RR-Random} & $600.0$ & $+1.62\ [1.51,\,1.72]$ & $15.7\%$ \\
\textsc{TS-One} & \rev{$\approx\!L$} & $+3.35\ [3.01,\,3.69]$ & $32.5\%$ \\
\textsc{TS-RR} & $600.0$ & $+4.54\ [4.16,\,4.91]$ & $44.0\%$ \\
\abl{\textsc{Mean-RR}} & \abl{$600.0$} & \abl{$+5.10\ [4.68,\,5.52]$} & \abl{$49.5\%$} \\
\textsc{Oracle} & \rev{$>1{,}000$} & $+10.30\ [9.68,\,10.92]$ & $100\%$ \\
\bottomrule
\end{tabular}
\end{table}

\subsection{The Price of Exploring}\label{app:explore}

\abl{Algorithm~\ref{alg:ts} explores by ranking on a posterior draw, and the \textsc{Mean-RR} row of Table~\ref{tab:main} prices that draw. It fills the same score-independent allocation from the same reconstructed posterior, ranking by the mean $\alpha_{c,s}/(\alpha_{c,s}+\beta_{c,s})$ instead of by one sample, and returns $+9.51\%$ against $+8.14\%$: exploration costs $1.37$ points of lift, while both rules beat the score-free round robin threefold. The gain thus belongs to the posterior, and the draw rides on top of it for little. That $1.37$ is also the whole cost with none of the benefit. Exploration pays off by turning impressions into evidence, but the replay is frozen: a creative's posterior is the same whether or not the draw put it on screen. Online it would not be, so $+8.14\%$ is a lower bound on what the sampled rule is worth.}

\end{document}